  \def\\{}%
  \def\cref#1{<#1>}%
\newcommand{\rbb}{\ensuremath{\mathbb{R}}}
\newcommand{\pbb}{\ensuremath{\mathbb{P}}}
\newcommand{\ebb}{\ensuremath{\mathbb{E}}}
\newcommand{\ie}{i.e.\ }
\newcommand{\eg}{e.g.\ }
\theoremstyle{definition}
\newtheorem{theorem}{Theorem}[section]
\newtheorem*{theorem*}{Theorem}
\theoremstyle{remark}
\begin{document}

%

%

\twocolumn[

\aistatstitle{Score matching for bridges without learning time-reversals}


\aistatsauthor{ Elizabeth L. Baker \And Moritz Schauer \And  Stefan Sommer }

\aistatsaddress{ Dept. of Computer Science\\
University of Copenhagen\\ \And  Dept. of Mathematical Sciences\\
Chalmers University of Technology \\and University of Gothenburg \And Dept. of Computer Science\\
University of Copenhagen\\ } ]

\begin{abstract}
We propose a new algorithm for learning bridged diffusion processes using score-matching methods. 
Our method relies on reversing the dynamics of the forward process and using this to learn a score function, which, via Doob's $h$-transform, yields a bridged diffusion process; that is, a process conditioned on an endpoint. 
In contrast to prior methods, we learn the score term $\nabla_x \log p(t, x; T, y)$ directly, for given $t, y$, completely avoiding first learning a time-reversal. 
We compare the performance of our algorithm with existing methods and see that it outperforms using the (learned) time-reversals to learn the score term. 
The code can be found at 
\url{https://github.com/libbylbaker/forward_bridge}.
\end{abstract}

\section{INTRODUCTION}
Given a diffusion process, we consider conditioning its endpoint to take a specific value or distribution. 
In general, it can be shown by Doob's $h$-transform that a conditioned diffusion process can also be written as a stochastic differential equation (SDE), containing the score term $\nabla_x \log p(t, x; T, y)$, where $p$ is the transition density of the unconditioned SDE. The problem is, this term is in general intractable, since there is no closed form for the transition density. Finding alternative methods to sample from the bridge process has received considerable attention in the past \citep{delyon_simulation_2006,papaspiliopoulos2012importance,bladt2014simple,schauer_guided_2017,heng_simulating_2022}. Diffusion processes have a multitude of applications across numerous areas, where observations need to be included, leading to bridge processes (see \citet{papaspiliopoulos2012importance} for some examples). 
\begin{figure}[ht]
    \centering
\includegraphics[width=0.43\textwidth]{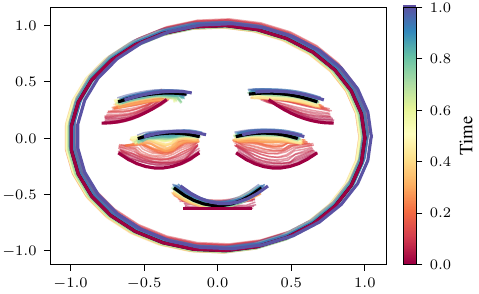}
    \caption{We learn the SDE in \cref{eq: kunita sde} conditioned to hit the shape in black at time $t=1$. We see that, starting the trajectory at the shape in red at time $t=0$, we get a path between the two emojis, where each time point in the path represents a shape.}
    \label{fig:kunita flow}
\end{figure}
Bridges are also strongly linked with generative modelling. There, the aim is effectively to learn a bridge between two given distributions, for example two data distributions, or a data distribution and a normal distribution \citep{de2021diffusion,zhou2024denoising}. 
These methods rely on time-reversals of a linear SDE, where, due to the linearity, the transition densities have a closed form. 
Bridges also are used within shape analysis.
 For example, in medical imaging, to model how shapes of organs change in the face of disease \citep{arnaudon2017stochastic}, or in evolutionary biology to model how shapes of organisms change over time \citep{baker_conditioning_2024}. See \cref{fig:kunita flow} for an example of a conditioned shape process.


\paragraph{Contributions}
We propose a new algorithm to learn bridged diffusion processes by learning the score term occurring in the conditioned process. 
We learn this term by combining the score matching methods proposed in \citet{heng_simulating_2022}, with adjoint diffusions introduced in \citet{milstein_transition_2004}. 
Importantly, these adjoint diffusions can be simulated directly, and replace the need for learning a time-reversal. 
This differs from \citet{heng_simulating_2022}, where in order to learn the score term, they first learn the time-reversed diffusion and then train a second time on the learned trajectories to get the forward-in-time bridge. Our work negates the need for learning the time-reversal, meaning we only need train once, effectively halving the necessary training time. 
We compare our method with other existing methods and evaluate it on a range of linear and non-linear SDEs.

\section{BACKGROUND}

We are interested in conditioning an SDE to take a certain value at its end time.
The conditioned process can again be written as an SDE via Doob's $h$-transform which we describe in \cref{sec: doobs}. 
However, this SDE has an intractable term, the score $\nabla_x \log p(t,x;T,y)$. 
In \cref{sec: reversals} we discuss the adjoint of the SDE which, up to a correction term $\varpi$, has the transition density $q(t',y; t,  \cdot) = \varpi(t', y; t)^{-1}  p(t, \cdot; t', y)$ when started in $y$ at time $t'$. 
In the proposed method, we use the adjoint processes to directly learn the term $\nabla_x \log p(t, x; T, y)$, used for forward bridges.
 Score learning provides a method for learning a similar but different term $\nabla_{x} \log p(0, x_0; t, x)$ and is discussed in \cref{sec: score matching}.

\subsection{Doob's \texorpdfstring{$h$}{h}-transform}
\label{sec: doobs}

Consider an SDE with $X(0) = x_0 \in \rbb^d$ defined as
\begin{align}\label{eq: unconditioned process}
    \mathrm{d}X(t) = f(t, X(t))\mathrm{d}t + \sigma(t, X(t))\mathrm{d}W(t),
\end{align}
where $f\colon[0, T] \times \rbb^d \to \rbb^d, \sigma\colon[0, T] \times \rbb^d \to \rbb^{d \times m}$ and $W(t)$ is a $m$-dimensional Wiener process. We assume that $f$ and $\sigma$ satisfy a global Lipschitz condition so that \cref{eq: unconditioned process} has a Markov solution \citep[Theorem 19.23]{schilling_brownian_2014}. Assume further that $X(t)$ has a transition density $p(t, x; t', x')$ with $t'>t$, satisfying
\begin{align}
   \ebb\left[X(t')\in A \mid X(t)=x\right] = \int_{A} p(t, x; t', x') \mathrm{d}x'. 
\end{align}
It is well known that $X(t)$ conditioned on $X(T) = y$ for a fixed $y$, satisfies another SDE, namely
\begin{align}\label{eq: conditioned process}
\begin{split}    
        \mathrm{d}\Bar{X}(t) &= 
        f(t, \Bar{X}(t)) \mathrm{d}t\\ 
        &+ {(\sigma\sigma^\top)(t, \Bar{X}(t))}\nabla \log p(t, \Bar{X}(t); T, y)
        \mathrm{d}t \\
        &+ \sigma(t, \Bar{X}(t))\mathrm{d}W(t), 
\end{split}
\end{align}
with $\Bar{X}(0) = x_0$ and where $\nabla$ is the gradient with respect to the argument $\Bar{X}(t)$ \citep[Chapter 7.5]{sarkka_applied_2019}.  
However, the score $\nabla_x \log p(t, x; T, y)$ is intractable for non-linear SDEs. 

\subsection{Adjoint processes with reversed dynamics}\label{sec: reversals}

Here we motivate the adjoint process $\{Y(t), \mathcal{Y}(t)\}$ of the unconditioned process $X(t)$ which receives its name from its relation to the adjoint operator of the infinitesimal generator of $X$ seen as a Markov process.
This process is properly defined in \cref{eq: reverse and correction}. 
Intuitively, the adjoint process has the reversed dynamics of the unconditioned process: for example, the adjoint $Y(t)$ of a Wiener process with drift $\mu$, i.e.~$W(t) + t \mu$, is a Wiener process with drift $-\mu$.
In this sense, it is a reversal, but not a time-reversal as in \citet{haussmann1986time}, since it does not satisfy $Y(T-t) = X(t)$.

The theory presented in this section is based on \citet{milstein_transition_2004, bayer_simulation_2013}
 and is also developed in \citet[Section 11]{v2_2020automatic}. 
The significance of the adjoint process is that for fixed $T, y$, we can use it to access $p(t, x; T, y)$ and therefore the score $\nabla \log p(t, x; T, y)$. 
To compare to the forward process, fix $x_0, t_0, y, T$ and let $g$ be a real-valued function. Then 
\begin{align}\label{eq: expectation of reverse}
    \ebb[g(X(t))] &= \int g(x) p(t_0, x_0; t, x) \mathrm{d}x\\ 
\ebb[g(Y_{t, y}(T))\mathcal{Y}_{t, y}(T)] &= \int g(x)p(t, x; T, y) \mathrm{d}x.
\end{align}
Here, $\mathcal Y$ is a weight process accounting for the fact that adjoint dynamics in general are not Markovian necessitating a Feynman-Kac representation. 
The processes arise by using the Fokker-Planck equation for the transition density $p$, and then multiplying this by $g(x)$ and integrating with respect to $x$. Doing so leads to a Feynman-Kac representation for $\int g(x)p(0, x_0; t, x) \mathrm{d}x$, with SDE $Y$ and weight function $\mathcal{Y}$. For details we refer to \cite{milstein_transition_2004}.

For fixed $y, t_0, T$, with $t_0\leq t' \leq T$, and $T_0 = T +t_0$, the adjoint process 
$\{Y_{t_0, y}(t), \mathcal{Y}_{t_0, y}(t)\}$
is defined as follows:
\begin{subequations}\label{eq: reverse and correction}
\begin{align}
    \mathrm{d}Y &= \alpha(t, Y) \mathrm{d}t + \Tilde{\sigma}(t, Y)\mathrm{d}\Tilde{W}(t),  &Y(t_0) = y,\label{eq: reverse}\\ 
    \mathrm{d}\mathcal{Y} &= c(t, Y) \mathcal{Y}\mathrm{d}t, &\mathcal{Y}(t_0) = 1,\label{eq: correction} 
\end{align}    
\end{subequations}
where
\begin{subequations}   
\begin{align}
\begin{split}    
   \alpha^i(t', x) &\coloneqq \sum_{j=1}^{d} \frac{\partial}{\partial x^j}(\sigma\sigma^\top)^{ij}(T_0-t', x)
   \\
   &~~~~~-f^i(T_0-t', x),
\end{split}
   \\
    \tilde{\sigma}(t', x) &\coloneqq \sigma(T_0-t', x),
    \\
    \begin{split}     
    c(t', x) &\coloneqq \frac{1}{2}\sum_{i,j =1}^{d} \frac{\partial^2(\sigma\sigma^\top)^{ij}}{\partial x^i \partial x^j}(T_0-t', x) 
    \\ 
    &~~~~~-\sum_{i=1}^{d}\frac{\partial f^i}{\partial x^i}(T_0-t', x).
\end{split}
\end{align}
\end{subequations}
From now on, we denote $Y_y \coloneqq Y_{0, y}$ and $\mathcal{Y}_{y} \coloneqq \mathcal{Y}_{0, y}$, but the following also holds for $t_0 \neq 0$. 
Crucially, $Y_{y}$ and $\mathcal{Y}_{y}$ can be computed explicitly, either by hand or using automatic differentiation. Therefore, we can sample from the adjoint process using, for example, the Euler-Maruyama scheme. 
Moreover, following \citet[Theorem 3.3]{bayer_simulation_2013}, we can introduce more time points. For a time grid $\mathcal{T}=\{0=t_0 < t_1 < \cdots < t_L =T\}$, and setting $\hat{t}_i = T- t_{L-i}$ and $y_{L+1} = y$, it holds 
\begin{align}
\begin{split}
    &\ebb[f(Y_{y}(\hat{t}_L), \cdots, Y_{y}(\hat{t}_1))\mathcal{Y}_{y}(T)]\\
    &= \int_{\rbb^{d\times L}} f(y_1, \cdots, y_L)\prod_{i=1}^L p(t_{i-1}, y_i, t_i, y_{i+1})\mathrm{d}y_i. 
    \end{split}
\end{align}
In particular, we can now represent the score $s(t, x)$ as the gradient of the log density $q({T-t}, \cdot)$,
where $q({T-t}, \cdot) = p(t, x; T, y)$
is the marginal distribution of $Y_y(T-t)$ weighted by $\mathcal Y_y$, and $\hat t \in [0,T]$ by
\begin{align}    
\int_A f(x) q({\hat t}, x) \mathrm{d}x = \ebb[f(Y_{y}(\hat t)) \mathcal{Y}_{y}(\hat{t}_L)].
\end{align}

\subsection{Score matching bridge sampling}\label{sec: score matching}
In score matching bridge sampling, the task is to learn a neural approximation $s_\theta(t, x)$ to the score $ s(t, x) = \nabla_x \log p(t, x; T, y)$. In our case, 
samples from $Y_y(T-t)$ weighted with  $\mathcal{Y}_{y}(T-t)$ can be used in training, as they represent the unnormalised density $q(T-t, \cdot) = p(t, \cdot; T, v)$.
This precisely mimics the use of samples of the noising process in generative diffusion models with  $Y_y({\hat t}),  \mathcal{Y}_{y}(\hat t)$ in the role of the noising process.

To wit, vanilla score matching as introduced in  \citet{JMLR:v6:hyvarinen05a} takes the form, for $\hat t = T - t$
\begin{align}
&\min_{\theta} \int_0^T\int q({\hat t},x) \sum^N_{i = 1}S_i\mathrm{d} x \mathrm{d}t,
\\
S_i &\coloneqq
\frac{\partial}{\partial x_i} s_{\theta, i}(t, x) + \frac{1}{2} s_{\theta, i}(t, x)^2, 
\end{align}
where we introduce $S_i$ for notational brevity.
Note in particular that knowledge of the transition density is not required, however, computing the divergence term is expensive.
To avoid computing the divergence term when learning $\nabla_{x} \log p(0, x_0; t, x)$, instead one can use a similar loss objective to that proposed by \citet{vincent2011connection}
\begin{align}\label{eq: intractable loss}
    \mathcal{L}(\theta) = \int_0^T \ebb[\|s_\theta(t, X_t) - s(t, X_t)\|^2]\mathrm{d}t,
\end{align}
where $s(t, x) = \nabla_{x} \log p(0, x_0; t, x)$. We instead will take the expectation with respect to conditioned trajectories, and will let $s(t, x) \coloneqq \nabla \log p(t, x; T, y)$.
In and of itself, this optimisation problem is still intractable. However, \citet{heng_simulating_2022} show that for $C$ independent of $\theta$
\begin{subequations}    
\begin{align}
\mathcal{L}(\theta) &= 
    C +
    \sum_{l=0}^{L-1}
    \int_{t_l}^{t_{l+1}}\ebb [
    \|s_{\theta}(t, X_t) - s_l\|^2_{\sigma\sigma^\top} ]\mathrm{d}t,\\
s_l &= \nabla \log p(t_{l}, X_{t_{l}}; t, X_t),
\end{align}
\end{subequations}
 where $s_l$ is used to simplify the notation, and $\|x\|_A = \|A^{1/2}x\|$ is the weighted norm for $A$ a positive definite matrix.
For small time steps, the transition density $p(t_{l}, X_{t_{l}}; t, X_t)$ can be estimated using the Euler-Maruyama scheme.

\section{RELATED WORK}
In this section, we survey the existing methods for computing bridge processes. 
We concentrate specifically on methods that can be applied to any diffusion process to find the forward bridge. 
For this reason, we deem usual score matching methods as out of scope, since the main motivation of these methods is to find a time-reversal of a specific diffusion, usually an Ornstein-Uhlenbeck, where the score function can be written in closed form.

\subsection{Score matching methods}

Most related to our work is \citet{heng_simulating_2022}'s, who first learn the time-reversal of the bridge process. 
They observe that this is equivalent to learning the term $\nabla_x \log p(0, x_0; t, x)$. Repeating the process a second time -- now using the learned time-reversals in place of unconditioned forward processes -- gives a forward-in-time bridge.
Moreover, they propose a loss function that works with non-linear SDEs, where the transition density is not tractable.
We build on \cite{heng_simulating_2022}'s work by using their loss function for bridging diffusion bridges. However, our implementation requires only half the compute as follows: Instead of training on time-reversals (which must be learned), we train using adjoint processes. The adjoint processes can be written in closed form and therefore we may directly simulate from them. Using these, we learn the term $\nabla \log p(t, x; T, y)$ (and thereby the bridged process) directly, without needing to train twice.

Recently, \citet{singhal2024whats} consider
using non-linear SDEs
instead of linear ones within generative modelling, meaning
the transition densities for the SDE
are also unknown. However,
instead of using Euler-Maruyama
approximations for the transition
kernels as in \citet{heng_simulating_2022}, they
consider other higher-order approximations.
Due to the non-linear SDEs this is closer to our problem, however they find time-reversals and not forward bridges, which is the focus of this work.

\subsection{MCMC methods}
One method of sampling from the bridge processes is to sample from a second approximate bridge, where the ratio between the true bridge and the approximate bridge is known. For an unconditional process as in \cref{eq: unconditioned process}, the approximate bridges have form
\begin{align}
\begin{split}
\mathrm{d}X(t) &= f(t, X(t))\mathrm{d}t + s(t, X(t))\mathrm{d}t\\ 
&~~~~+ \sigma(t, X(t))\mathrm{d}B(t),
\end{split}
\end{align}
where different authors have considered different options for the function $s$. In \citet{pedersen1995consistency} $s=0$, that is, the samples are taken from the original unconditioned process. \citet{delyon_simulation_2006} instead take the Brownian bridge drift term. More recently, \citet{schauer_guided_2017} set $s$ to be the term arising from conditioning an auxiliary process with a known transition density. Then, one can derive a ratio between the path probability of the true bridge and the bridge approximation and can therefore use Markov chain Monte Carlo methods to sample from the true distribution.

\subsection{Adjoint methods}

\citet{milstein_transition_2004} estimate the transition density of an SDE by introducing adjoint processes. Simulation happens both forward from the start state and backwards from the end state. When connected, this is called a forward-reverse representation, and it gives rise to a Monte-Carlo estimator. \citet{bayer_simulation_2013} expand this to estimating expectations of functions of multiple time-steps of conditioned SDEs, again with Monte-Carlo methods. We use the probabilistic representations with adjoints they introduce, however we use these to construct a loss function to learn $\nabla_x \log p(t, x; T, y)$ for varying $t, x, y$. Thereby, we can sample directly from the conditioned trajectories. 

\section{METHOD}

This section presents our algorithm to find the score function $\nabla_x \log p(t, x; T, y)$ that will be used to simulate the bridge process. 
In \cref{sec: method loss function}, we derive the main loss function for learning the score function and show it minimises the Kullback-Leibler divergence between the conditioned path and the path measure induced by the learnt conditioned process. In \cref{sec: computing loss}, we provide details on how we minimise the loss function.
In \cref{sec: method multiple end points}, we consider varying values of $y$, and the case that the end point $y$ is forced to follow a given distribution, instead of being a fixed point.
Proofs can be found in \cref{Appendix: Proofs}.

\subsection{A loss function}\label{sec: method loss function}

In order to sample the bridged SDE, we learn the score term $\nabla_x \log p(t, x; T, y)$ for fixed $T, y$. 
Note that, although similar, this is different from the term $\nabla_{x} \log p(0, x_0; t, x)$ that is learnt for time-reversed processes, as in score-matching for diffusion samplers. 
Indeed, score-matching methods rely on the SDE $X_t$ having $p(0, x_0; t, x)$ as a transition density, and therefore by sampling from the process $X_t$, we are sampling from the distribution of the process at time $t$. 
This is not true for $p(t, x; T, y)$. 
Our proposal is instead to sample from the adjoint stochastic process, where we can recover the transitions $p(t, x; T, y)$.

Given a function $s\colon[0, T] \times \rbb^d \to \rbb^d$, we define the SDE $X^s$ with path distribution $\mathbb{Q}^s$ as
\begin{align}\label{eq: X^s}
\begin{split}  
    \mathrm{d}X^s(t) 
    &
    = f(t, X^s(t))\mathrm{d}t 
    \\
    &\quad\quad
    + (\sigma \sigma^\top)(t, X^s(t))s(t, X^s(t))\mathrm{d}t 
    \\
    &\quad\quad
    + \sigma(t, X^s(t))\mathrm{d}W(t).
    \end{split}
\end{align}

In what follows, let $K_\epsilon(x, y)$ be a function, such that $K_\epsilon (x, y) \to \delta_0(x-y)$ as $\epsilon \to 0$, for $\delta_0$ the dirac delta function. For fixed $y, T$ with $Y \coloneqq Y_{0, y}, \mathcal{Y} \coloneqq \mathcal{Y}_{0, y}$ (see \cref{eq: reverse,eq: correction}) and $\hat{t} \coloneqq T-t$ we introduce
\begin{subequations}
\begin{align}\label{eq: loss fn x0}
    \mathcal{L}(s, x_0) 
    &\coloneqq \ebb\left[\frac{\lim_{\epsilon \to 0}\mathcal{Y}(T) K_\epsilon(Y(T), x_0)}{p(0, x_0; T, y)}\sum_{\ell=1}^L I_\ell\right],
    \\ \label{eq: Il}
    I_\ell &= \int_{t_{\ell-1}}^{t_\ell} 
    \|s(t, Y(\hat{t}))
    - s_\ell(t, Y(\hat{t}))\|_{\sigma\sigma^\top}^2\mathrm{d}t,
\\
    s_\ell(t, Y(\hat{t})) &= \nabla \log p(t, Y(\hat{t}); t_\ell, Y(\hat{t_\ell})),
\end{align}
\end{subequations}
where $s_\ell, I_\ell$ have been used to make the notation more compact.
By minimising this, we are minimising a Kullback-Leibler divergence:

\begin{theorem} \label{thm: loss function}
Let $X$ be a diffusion process as defined in \cref{eq: unconditioned process} and suppose further that $f, \sigma$ are $C^2$ and that $X$ admits $C^2$ transition densities. Let $\Bar{X}$ be the conditioned diffusion satisfying \cref{eq: conditioned process}. Let $\Bar{\pbb}$
be the path probability of 
$\Bar{X}$ and 
$\mathbb{Q}^s$ the path probability of 
$X^s$ in \cref{eq: X^s} for a function $s$. 
Then for a given starting point $x_0$
\begin{align}
    \arg\inf_{s} \text{KL}(\Bar{\pbb} \parallel \mathbb{Q}^s) = \arg\inf_{s} \mathcal{L}(s, x_0),
\end{align}
where the infimum is over functions 
$s\colon [0, T]\times \rbb^d \to \rbb^d$.
\end{theorem}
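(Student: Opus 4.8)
The plan is to connect the two functionals through Girsanov's theorem on the conditioned side and through the adjoint-process Feynman--Kac representation of \cref{sec: reversals} on the loss side, absorbing everything that does not depend on $s$ by a tower-property (``denoising'') identity. Write $s^\star(t,x):=\nabla\log p(t,x;T,y)$. First, $\pbb^y$ and $\mathbb Q^s$ are laws of Itô diffusions with the same diffusion coefficient $\sigma$, the same start $X_0=x_0$, and drifts $f+(\sigma\sigma^\top)s^\star$ and $f+(\sigma\sigma^\top)s$ respectively; Girsanov's theorem therefore gives
\[
\mathrm{KL}(\pbb^y\parallel\mathbb Q^s)=\tfrac12\,\ebb_{\pbb^y}\!\Big[\textstyle\int_0^T\|s^\star(t,X^y(t))-s(t,X^y(t))\|_{\sigma\sigma^\top}^2\,\mathrm dt\Big],
\]
where one factor of $\sigma\sigma^\top$ in the drift difference cancels against $(\sigma\sigma^\top)^{-1}$ in the Girsanov quadratic form. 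The $C^2$ and global Lipschitz hypotheses, together with nondegeneracy of $\sigma\sigma^\top$, make the change of measure legitimate; rigorously one localises and checks a Novikov-type condition, which I would quote. In particular the right-hand side is zero exactly when $s=s^\star$ along the bridge, so $\inf_s\mathrm{KL}(\pbb^y\parallel\mathbb Q^s)=0$, attained at $s=s^\star$.

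Next I rewrite the loss. Augmenting the time grid by the point $t$ and applying the multi-time version of \cref{eq: expectation of reverse} (\citet[Theorem 3.3]{bayer_simulation_2013}), I claim that for any continuous $G\colon\rbb^d\times\rbb^d\to\rbb$ of at most polynomial growth,
\[
\lim_{\epsilon\to0}\ebb\!\big[G(Y(\hat t),Y(\hat t_l))\,\mathcal Y(T)\,K_\epsilon(Y(T),X_0)\big]=p(0,x_0;T,y)\,\ebb_{\pbb^y}\!\big[G(X^y(t),X^y(t_l))\big].
\]
Indeed $\prod_i p(t_{i-1},y_i;t_i,y_{i+1})$ restricted by $\delta(y_1-x_0)$ is the unnormalised joint law of the conditioned process on the grid, with total mass $p(0,x_0;T,y)$ by Chapman--Kolmogorov. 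Applying this to the integrand of \cref{eq: loss fn} turns $\mathcal L(s)$ into $p(0,x_0;T,y)$ times $\sum_{l=1}^L\int_{t_{l-1}}^{t_l}\ebb_{\pbb^y}\|s(t,X^y(t))-\nabla\log p(t,X^y(t);t_l,X^y(t_l))\|_{\sigma\sigma^\top}^2\,\mathrm dt$.

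Now the tower property. From $p(t,x;T,y)=\int p(t,x;t_l,x')\,p(t_l,x';T,y)\,\mathrm dx'$, differentiating in $x$ and dividing by $p(t,x;T,y)$ shows that $\ebb_{\pbb^y}\big[\nabla\log p(t,X^y(t);t_l,X^y(t_l))\,\big|\,X^y(t)\big]=s^\star(t,X^y(t))$, since the conditional law of $X^y(t_l)$ given $X^y(t)=x$ under $\pbb^y$ has density $p(t,x;t_l,x')p(t_l,x';T,y)/p(t,x;T,y)$. As $s(t,X^y(t))-s^\star(t,X^y(t))$ is $\sigma(X^y(t))$-measurable, the cross term vanishes and
\[
\ebb_{\pbb^y}\|s-\nabla\log p(t,\cdot\,;t_l,\cdot)\|_{\sigma\sigma^\top}^2=\ebb_{\pbb^y}\|s-s^\star\|_{\sigma\sigma^\top}^2+\ebb_{\pbb^y}\|\nabla\log p(t,\cdot\,;t_l,\cdot)-s^\star\|_{\sigma\sigma^\top}^2,
\]
the last term not involving $s$. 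Summing over $l$ so that $\sum_l\int_{t_{l-1}}^{t_l}=\int_0^T$, one gets $\mathcal L(s)=p(0,x_0;T,y)\big(2\,\mathrm{KL}(\pbb^y\parallel\mathbb Q^s)+R\big)$ with $R\ge0$ independent of $s$; thus $\mathcal L$ and $\mathrm{KL}(\pbb^y\parallel\cdot)$ differ by an $s$-independent affine map of positive slope, so they are minimised by the same $s=s^\star=\nabla\log p(\cdot\,;T,y)$, and taking infima over $s$ gives the identity of the theorem.

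The main obstacle is the $\epsilon\to0$ limit in the second step: one must interchange the limit with the expectation, i.e.\ establish uniform integrability of $\{\mathcal Y(T)K_\epsilon(Y(T),X_0)\}_\epsilon$ against the continuous, at-most-polynomially-growing integrand, which is precisely where the $C^2$ and $C^2$-transition-density hypotheses are used (via moment bounds on the adjoint SDE \cref{eq: reverse,eq: correction}, continuity of $\mathcal Y$, and Gaussian-type lower bounds on $p$ near $X_0$). A secondary point is checking that the transition-density gradients appearing above are square-integrable against $\pbb^y$ on a grid of positive mesh, so that every display is finite; this again rests on near-diagonal estimates for $p$.
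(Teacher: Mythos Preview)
Your argument is correct and uses the same three ingredients as the paper's proof: Girsanov for the KL side, the adjoint/Feynman--Kac representation of \citet{bayer_simulation_2013} to pass between $Y$-expectations and transition-density integrals, and Chapman--Kolmogorov to handle the cross term. The organisation differs slightly. The paper keeps everything in adjoint-process expectations: it expands the square inside $\ebb[\,\cdot\,\mathcal Y(T)K_\epsilon]$, converts only the cross term to a density integral via \citet[Theorem~3.6]{bayer_simulation_2013}, applies Chapman--Kolmogorov to insert the intermediate time $t_l$, and converts back. You instead convert the whole loss to $\pbb^y$-expectations up front (your displayed identity with $G$) and then invoke the tower property $\ebb_{\pbb^y}[\nabla\log p(t,X^y(t);t_l,X^y(t_l))\mid X^y(t)]=s^\star(t,X^y(t))$, which is exactly the probabilistic form of the same Chapman--Kolmogorov step. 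Your route makes the ``denoising'' Pythagorean decomposition more explicit; the paper's route avoids having to state the conditional law of $X^y(t_l)$ given $X^y(t)$. Either way one arrives at $\mathcal L(s)=p(0,x_0;T,y)\big(2\,\mathrm{KL}(\pbb^y\parallel\mathbb Q^s)+R\big)$ with $R$ independent of $s$, which is precisely what the paper establishes (its ``$+\text{const.}$'' is your $R$); note that this yields equality of \emph{minimisers} rather than of the numerical infima, which is also all the paper's proof actually shows.
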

\begin{proof}
 First, we use the adjoint SDE and \citet[Theorem 3.7]{bayer_simulation_2013} to write the expectations as integrals over the transition densities $p(t, x; T, y)$.
 Secondly, as proposed in \citet{heng_simulating_2022}, we use the Chapman-Kolmogorov equation to split the integration into smaller, approximable parts. A full proof is given in \cref{proof: kl loss function}.
\end{proof}

We have shown that finding $s$ that minimises the loss function in \cref{eq: loss fn}, also minimises Kullback-Leibler divergence between the conditioned path probability space 
and the path space $\mathbb{Q}^s$. The difference is that it is possible to approximate $\mathcal{L}(s)$. For example, we can take $\epsilon$ small, and for $t'-t$ small we can approximate $\nabla_x \log p(t, x; t', x')$. 
Next, we remove the dependence of the loss function on the start point.

\begin{theorem}\label{thm: independent loss fn}
We assume the same setup as in \cref{thm: loss function}. Define $\Bar{\mathcal{L}}(s) \coloneqq \int\mathcal{L}(s, x_0) \mathrm{d}x_0$. Then, letting $I_\ell$ be defined as in \cref{eq: Il}:
\begin{subequations}
\begin{align}\label{eq: loss fn}
        \Bar{\mathcal{L}}(s) &= \mathbb{E} \left[\frac{\mathcal{Y}(T)}{p(0, Y(T); T, y)}
        \sum_{l=1}^L I_\ell
    \right].
\end{align}
\end{subequations}

\end{theorem}

\begin{proof}
    This is a consequence of the integration of $K_{\epsilon}(Y(T), x_0)$. See \cref{proof: x_0 integral} for details.
\end{proof}
\begin{figure*}[ht]
    \centering
\includegraphics[width=0.95\textwidth]{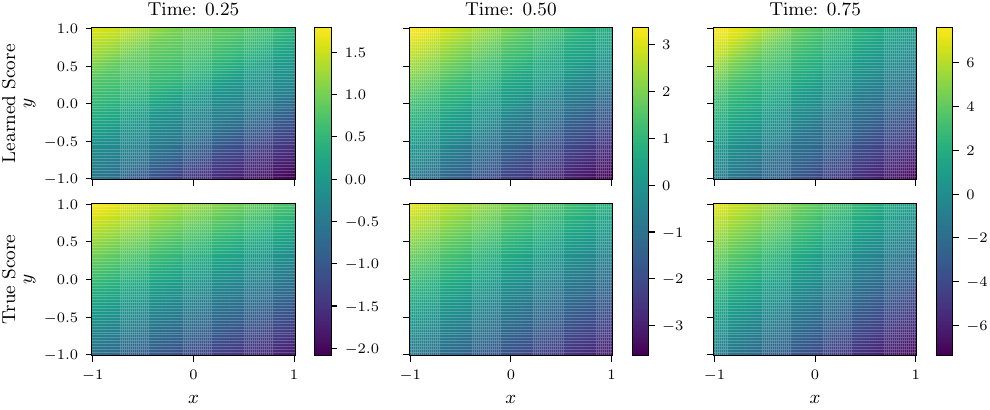}
    \caption{For varying $x, y$ and different times $t \in [0.25, 0.5, 0.75]$, we plot the true score $\nabla_x \log p(t, x; 1, y)$ and the learned score $s_\theta(t, x; y)$, as described in \cref{sec: ou experiment}. The learned score takes a fixed end time $T=1$, but takes both $x$ and $y$ as input values. 
    The absolute error between the true and learned score is reported in \cref{fig: ou error}.
}
    \label{fig:varied y error}
\end{figure*}
Finally, we define a simplified loss function, which we use in the experiments. It is a weighted version of $\Bar{\mathcal{L}}(s)$:
\begin{align}
\begin{split}
    \mathcal{L}(s) &\coloneqq \mathbb{E} \left[
        \sum_{\ell=1}^L
         I_\ell
    \right]
\end{split}
\end{align}
where $I_\ell$ is as in \cref{eq: Il}.

\subsection{Optimising the loss function}\label{sec: computing loss}

To solve the optimisation problem, we let $s$ be parameterised by a neural network. We simulate batches of trajectories of the adjoint process $\{Y(t), \mathcal{Y}(t)\}$ using the Euler-Maruyama algorithm.
Like \citet{heng_simulating_2022}, we approximate $p(t, x; t+\epsilon, y)$ as a Gaussian, so that 
$\nabla \log p(t, x; t + \epsilon, y) \approx g(t, x; t+\epsilon, y)$, where $g$ is a Euler-Maruyama step, \ie for $(\sigma \sigma^\top)\coloneqq \Sigma$
\begin{align}
g(t, x, t', x') = \frac{1}{\Delta t} \Sigma(t, x)^{-1}(x' -x -\Delta t f(t, x)).
\end{align}
We then approximate the loss function as
\begin{subequations}
\begin{align}\label{eq: approx loss fn}
    \mathcal{L}(s_\theta) &\approx 
    \frac{ \Delta t}{N} \sum_{Y \in B_N} \sum_{\ell =0}^{L-1} \| T_{\ell} \|_{\sigma\sigma^\top}^2,
    \\
    T_{\ell} &\coloneqq s_\theta (t_{\ell }, Y_\ell) - g(t_\ell , Y_\ell; t_{\ell +1}, Y_{\ell+1}),
\end{align}
\end{subequations}
where $Y_\ell \coloneqq Y(T-t_\ell)$ and $B_N$ is a batch of $N$ simulations of $Y(t)$ started at the value $y$ on which we are conditioning.
\cref{algo: fixed y} gives the exact steps.

After using \cref{algo: fixed y} to learn $s_\theta$, we can then sample from the conditioned SDE
\begin{align}
\begin{split}   
\mathrm{d}X(t) &= (\sigma\sigma^\top)(t, X(t))s_\theta(t, X(t))\mathrm{d}t
\\
&~~~~+ f(t, X(t))\mathrm{d}t +\sigma(t, X(t))\mathrm{d}W(t), 
\end{split}
\end{align}
using, for example, the Euler-Maruyama algorithm.

\subsection{Training on multiple end points}\label{sec: method multiple end points}

We can extend to training on multiple end points in two ways. The first is where the end point $y$ is not fixed, rather sampled from some distribution $\pi_T$. 
That is we are conditioning by forcing the SDE to have a given distribution at the end time as in \cite{baudoin2002conditioned} (see also \citet[Example 2.4]{corstanje2023conditioning}). In this case we learn the term $\nabla \log h(t, x)$, where $h(t, x) \coloneqq \int
\frac{p(t, x; T, y)}{p(0, x_0; T, y)} 
\pi_T(\mathrm{d}y)$.
The second, is where we learn fixed $y$ for multiple $y$ values at once; that is for fixed $T$ and \textit{variable} $y$, we learn $s_\theta(t, x; y) \approx \nabla_x \log p(t, x; T, y)$. Both of these are straightforward extensions to the method, also in \cite{heng_simulating_2022}.

\paragraph{Distributions of endpoints}
To condition on a distribution $\pi_T$ instead of a fixed point $y$, we integrate over the loss function for different values of $y$. In practice, all that changes is instead of simulating trajectories $Y, \mathcal{Y}$ starting from the same value $y$, we now also sample the start value $y \sim \pi_T$.
Letting $Y_{y, \ell} \coloneqq Y_{y}(T-t_\ell)$, the loss in \cref{eq: approx loss fn} function becomes 
\begin{subequations}
\begin{align}
\mathcal{L}(s_\theta) &\approx
\frac{\Delta t}{N} \sum_{y \sim \pi_T} \sum_{\ell =0}^{L-1} \|T_\ell\|_{\sigma\sigma^\top}^2,
 \\
 T_{\ell} &\coloneqq s_\theta (t_{\ell }, Y_{y, \ell}) - g(t_\ell , Y_{y, \ell}; t_{\ell +1}, Y_{y, \ell+1}).
\end{align}
\end{subequations}

\paragraph{Multiple fixed endpoints}\label{sec:multiple fixed endpts}

For varied $y \in B \subseteq \rbb^d$, we integrate the loss function in \cref{eq: loss fn} over $y$ and take the loss over functions $s\colon [0, T] \times \rbb^d \times \rbb^d \to \rbb^d$. Then we take a batch $B_N$ of $N$ uniformly distributed $y \in B$.
In practice, the loss function that we optimise changes from \cref{eq: approx loss fn} to
\begin{subequations}  
\begin{align}
    \mathcal{L}(s_\theta) &\approx 
    \frac{\Delta t}{N} \sum_{y \in B_N}\sum_{\ell =0}^{L-1} 
    \|T_\ell\|_{\sigma\sigma^\top}^2,
\\
T_\ell &\coloneqq s_\theta (t_{\ell }, Y_{y, \ell}, y) - g(t_\ell , Y_{y, \ell}; t_{\ell +1}, Y_{y, \ell+1}),
\end{align}
\end{subequations}
where we have used $Y_{y, \ell} := Y_y(T-t_\ell)$, to mean the adjoint process $Y$ started from $y$ at time $0$.

\section{EXPERIMENTS}

\begin{figure}[ht]
\centering
\includegraphics[width=0.42\textwidth]{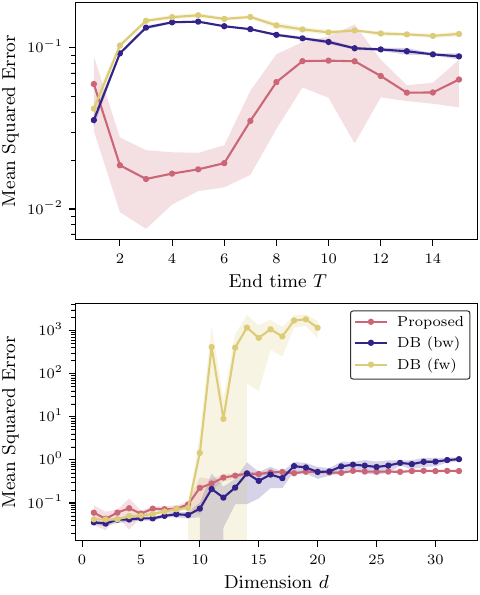}
\caption{
We plot the square error between the true and learned score averaged over times from $t=0$ to $t=1$.
For the proposed method and DB (fw) we learn $\nabla_{x} \log (t, x; T, y)$, and for DB (bw) we learn $\nabla_x \log (0, x_0; t, x)$.
\emph{Top:} Trained for $x_0=1$, $y=1$, with $1$-dimension and varying end times. 
\emph{Bottom:} Trained for $x_0=1.0$, $y=1.0$, with $T=1.0$. See \ref{sec: ou experiment} for details.}
\label{fig:dim_comparison}
\end{figure}
\begin{figure}[ht]
    \centering
\includegraphics[width=0.4\textwidth]{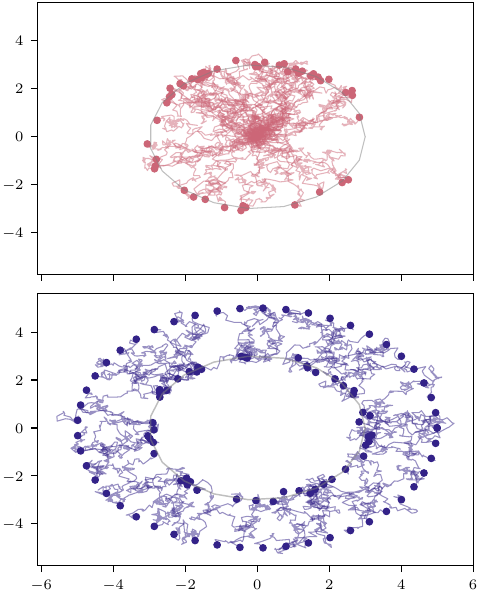}
    \caption{We train a 2D Wiener process process to hit a circle of radius $3$ at time $T=1.0$. On the top we plot $20$ trajectories started from the centre, and on the bottom we plot $20$ trajectories started from evenly spaced points on a circle of radius $5$.}
    \label{fig: bm distributed}
\end{figure}

We compare only to methods considering the problem where we are \textit{given} an SDE that we condition on given boundary conditions. 
This is different to the setup of generative modelling or denoising diffusion bridge models, where the SDE can be chosen freely. 
In this case, a natural choice is a linear SDE, which has explicit transition densities. Further experiment details can be found in \cref{Appendix: experiments}. Code is available at \url{https://github.com/libbylbaker/forward_bridge}.

\subsection{Ornstein--Uhlenbeck}\label{sec: ou experiment}

First, we condition an Ornstein-Uhlenbeck process. This is a good experiment to evaluate the method, since the true score is known, so we can compute the error. 
Explicitly, we consider 
\begin{align}\label{eq: OU unconditioned}
    \mathrm{d}X(t) = - X(t) \mathrm{d}t + \mathrm{d}W(t).
\end{align}
The formulas for the adjoint process and the score function can be found in \cref{Appendix: experiments ou}.

In our first experiment we learn $s_\theta(t, x; y) \approx \nabla \log p(t, x; T, y)$, for $y \in [-1.0, 1.0]$, as in \cref{sec:multiple fixed endpts}.
We plot the true and learned score in \cref{fig:varied y error}. In \cref{fig: ou error,fig: ou data distribution} we also plot the absolute error and the generated trajectories. We see that at earlier times the generated trajectories are less dispersed, thereby leading to higher errors in the score when the difference between $x$ and $y$ is larger.

Next, we compare our results to \citet{heng_simulating_2022} (using the authors' code \footnote{\texttt{https://github.com/jeremyhengjm/DiffusionBridge}.}).
We want to show the errors that occur when using the adjoint processes to learn the bridge, are similar to the errors using the forward processes to learn the time-reversal, and either similar or better than using the time-reversals to learn the forward bridge.

 In \cref{fig:dim_comparison}, we plot the squared error averaged over time points from $t=0$ to $t=1$ for various dimensions $d$ with $x_0=1.0$, and various end times $T$, with $d=1$. 
We compare this to the forward and time-reversed (or backwards) diffusion bridges from \citet{heng_simulating_2022}. 
The time-reversed, DB (bw), learns $\nabla_x \log p(0, x_0; t, x)$ instead of $\nabla_x \log p(t, x; T, y)$. The forward, DB (fw), leverages the time-reversal to learn $\nabla_x \log p(t, x; T, y)$.

As shown in \cref{fig:dim_comparison}, the time-reversed bridges (DB (bw)) demonstrate a similar error to the proposed approach. However, in learning the forward bridge, the proposed method is not only twice as efficient as DB (fw) -- requiring only half the number of trajectories due to being trained once -- but also exhibits greater stability in higher dimensions.

Finally, to see how it scales to slightly higher dimensions we train our proposed method under the same setup, also for 5 different random seeds. We report the mean squared error and standard deviation results in \cref{tab: ou high dim}.

\begin{table}[h]
\caption{Ornstein-Uhlenbeck Errors} \label{tab: ou high dim}
\begin{center}
\begin{tabular}{rrr}
   \textbf{Dim.} &      \textbf{MSE} &         \textbf{SD} \\
\hline
     32 & 0.553 & 0.008 \\
     50 & 0.548 & 0.019  \\
    100 & 0.567 & 0.003 \\
    200 & 0.566  & 0.003  \\
\end{tabular}
\end{center}
\end{table}

\subsection{Wiener process on distributed endpoints}

We may also wish to condition on a distribution, rather than fixed endpoints. To demonstrate the method in this situation, we condition a Wiener process on hitting a circle. That is, we take the SDE $\mathrm{d}X(t) = \mathrm{d}W(t)$, and we sample the end points $y_0$ uniformly from a circle of radius $3.0$. In \cref{fig: bm distributed} (left), we plot $20$ trajectories, each starting from the origin, and conditioned to hit the circle at time $T=1.0$. In \cref{fig: bm distributed} (right) we plot $20$ trajectories, this time starting from evenly sampled initial values on a circle of radius $5.0$. Note that both figures use the same trained score function. We only need to train once, since the training is independent of the initial value $x_0$, and only depends on the final value or distribution on which we are conditioning. 

\subsection{Cell model}\label{sec: cell}
So far we have only considered linear models. To test on non-linear models, we next consider the cell differentiation and development model from \citet{wang2011quantifying}, also considered by \citet{heng_simulating_2022}.
The SDE is defined as
\begin{align}\label{eq: cell sde}
\mathrm{d}X_1(t) &= f(X_1(t), X_2(t))
 \mathrm{d}t + \sigma \mathrm{d}W (t)
 \\ 
 \mathrm{d} X_2 (t) &= f (X_2 (t), X_1 (t)) \mathrm{d} t + \sigma \mathrm{d}W(t),
 \\
f(x, y) &= \frac{x(t)^4}{2^{-4} + x(t)^4} 
+ \frac{2^{-4}}{2^{-4} + y(t)^4}
- x(t)
\end{align}
and describes the process of cell differentiation into three cell types. 
In \cref{fig: cell forward} we plot some trajectories
of the unconditioned process for illustration.
We compare to the time-reversed bridge and the forward bridge of \citet{heng_simulating_2022} and the guided bridge method from \citet{schauer_guided_2017}. We choose to compare with these methods, since \citet{heng_simulating_2022} have already shown favourable results compared to \citet{pedersen1995consistency,durham2002,delyon_simulation_2006}. 

The true bridge is unknown, so instead we compare against trajectories from the unconditioned process with similar the boundary values. In \cref{fig: cell comparison}, we plot in grey trajectories from the unconditioned process satisfying $x_1(T) \in (1.3, 1.7), x_2(T) \in (0.0, 0.4)$ at time $T=2.0$. We then plot trajectories starting at $(0.1, 0.1)$, conditioned on $(1.5, 0.2)$ at time $T=2.0$, via the proposed method, the forward bridge (DB (fw)) of \citet{heng_simulating_2022} and with MCMC using guided proposals as in \citet{schauer_guided_2017}. 
For the time-reversed bridge, the trajectories are simulated from the end point $(1.5, 2.0)$, and run backwards in time, being conditioned to hit the point $(0.1, 0.1)$. 
The trajectories from the time-reversed SDE are then used to train the forward bridge \citep[Section 2.3]{heng_simulating_2022}.

We see in \cref{fig: cell comparison} that the proposed method, the time-reversed bridge and the forward bridge produce similar trajectories that follow the correct data distribution.
However, again the proposed method need only be trained once, whereas the forward bridge was trained on the time-reversed trajectories thereby needing double the amount of training. 

\begin{figure*}[h]
    \centering    \includegraphics{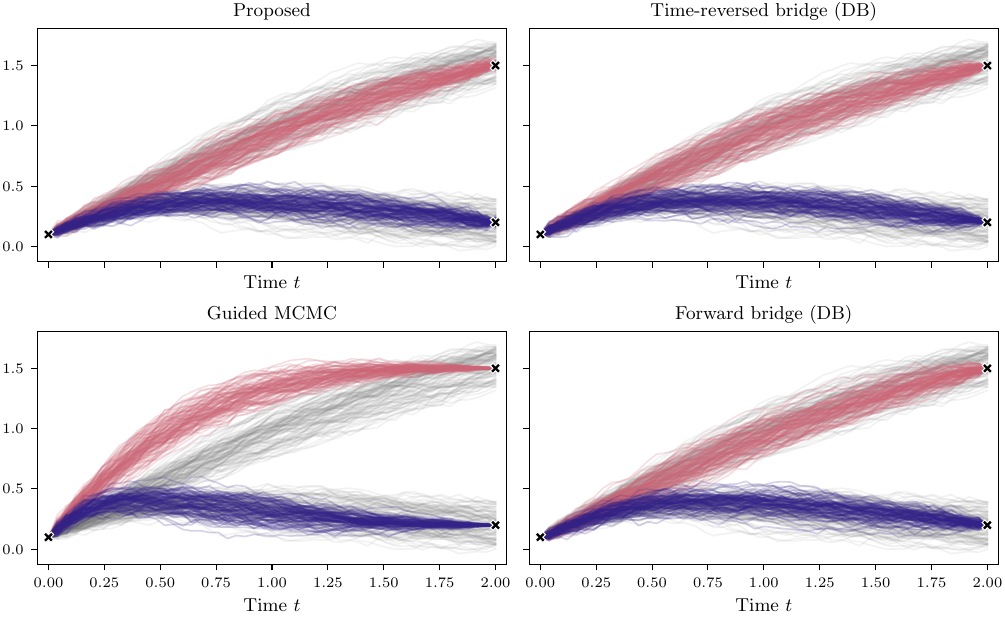}
    \caption{We plot $100$ conditioned trajectories with the methods \textit{Proposed, Guided MCMC} and \textit{Forward bridge (DB)} and the time-reversed bridge with  \textit{Time-reversed bridge}.  In grey, we plot some unconditioned trajectories that end in a given window about the point $(1.5, 0.2)$. For details see \ref{sec: cell}.}
    \label{fig: cell comparison}
\end{figure*}

\subsection{Shape models}

We continue increasing the difficulty of the problems, and apply our proposed method to stochastic shape analysis \citep{Arnaudon_Holm_Sommer_2023}. 
This is an interesting 
test since it involves conditioning SDEs on shape spaces, where the stochastic processes are non-linear and have non-trivial diffusion functions. 
Such a model is found in evolutionary biology when studying how the morphometry of a species changes as the species evolves \citep{baker_conditioning_2024,yang2024simulating}. Therefore, this experiment will give insights into what we can expect when applying the method to evolutionary biology, though we apply it to non-biological shapes.

For a set of points $\{x_i\}_{i=1}^n \subset \rbb^2$ representing the discretisation of a shape, we consider the SDE
\begin{align}\label{eq: kunita sde}
    \mathrm{d}X_t = \sum_{i=1}^m k(X(t), z_i)\mathrm{d}W_i(t),
\end{align}
with the initial constraint $X(0)=\{x_i\}_{i=1}^n \in \rbb^{n\times 2}$,
where $W_i$ are independent Wiener processes in $\rbb^2$, $\{z_i\}_{i=1}^m \subset \rbb^2$ is a grid on a compact subset of $\rbb^2$ and $k$ is the Gaussian function defined as $k(x, y) = \alpha \exp{-\frac{\|x-y\|^2}{2\sigma^2}}$ for some scalars $\sigma, \alpha$. 
This SDE represents the position of the points $x_i$ at time $t$.

We use the neural network architecture of \citet{yang2024simulating}. They use Fourier neural operators and learn the term $\nabla \log p(0, x_0; t, x_t),$ using the method proposed by \citet{heng_simulating_2022} to get a time-reversal of $Y_t$. 
This is applied to butterfly data to bridge between the outlines of butterflies. 
Instead, we apply the proposed method to the outlines of two emojis, conditioning such that the points reach a ``smiley face'' at time $t=1$. 

In \cref{fig:kunita flow} we plot one trajectory of the SDE \cref{eq: kunita sde}. 
At each time point, $X(t) \in \rbb^{n\times 2}$ can be interpreted as a stochastic change in the original shape. 
We have conditioned the SDE in \cref{eq: kunita difference} using \cref{algo: fixed y} such that at time $t=1$, the configuration of points is the smiley face shown in blue. For this trajectory, we start it from the sad face shown in red.
In this case, we cannot evaluate the learnt score directly, since the true score is unknown. 
However, we see that indeed the sample paths end at the wanted boundary points as expected, even for non-trivial diffusion functions and trivially chosen shapes.

\section{PROS, CONS, AND CONCLUDING REMARKS}

We believe the main drawback of this method is conditioning on unlikely events, that are not covered in the data. 
An obvious example would be conditioning a 1-dimensional Wiener process $W(t), W(0)=0$ such that $W(1)=100$. In this case, learning either the time-reversal or the bridge with proposed score-matching methods will struggle, since the generated trajectories are highly unlikely to cover this state.
We envisage this problem could be compounded using the adjoints. 
For example, consider conditioning on a state that is highly likely to be reached from a large range of starting states \eg conditioning the Ornstein-Uhlenbeck process on $0$ at time $100$. Then trajectories of the adjoint process will fan out to cover the wide range of likely start states.
This will be a disadvantage, if one is only interested in a small range of specific start points. 
However, it could also be an advantage to some extent, since a larger area will be covered by the data.

We have shown that we can use adjoint processes
to learn the score term $\nabla \log p(t, x; T, y)$ for
given $T, y$, which we use to sample from the bridged diffusion processes. This seems to
perform well compared
to using MCMC methods to
sample trajectories. 
When compared to the method in \citet{heng_simulating_2022} for simulating
forward-in-time bridges, we have demonstrated
that using the adjoints can lead to an
estimation with lower error and negates the need for training twice.

\subsubsection*{Acknowledgements}
The work presented in this article was done at the Center for Computational Evolutionary Morphometry and is partially supported by the Novo Nordisk Foundation grant NNF18OC0052000, by the Chalmers AI Research Centre (CHAIR) (``Stochastic Continuous-Depth Neural Network'') as well as a research grant (VIL40582) from VILLUM FONDEN and UCPH Data+ Strategy 2023 funds for interdisciplinary research.

\bibliography{bib.bib}

\begin{thebibliography}{}

\bibitem[Arnaudon et~al., 2023]{Arnaudon_Holm_Sommer_2023}
Arnaudon, A., Holm, D., and Sommer, S. (2023).
\newblock {\em Stochastic Shape Analysis}, page 1325–1348.
\newblock Springer International Publishing, Cham.

\bibitem[Arnaudon et~al., 2017]{arnaudon2017stochastic}
Arnaudon, A., Holm, D.~D., Pai, A., and Sommer, S. (2017).
\newblock A stochastic large deformation model for computational anatomy.
\newblock In {\em Information Processing in Medical Imaging: 25th International Conference, IPMI 2017, Boone, NC, USA, June 25-30, 2017, Proceedings 25}, pages 571--582. Springer.

\bibitem[Baker et~al., 2024]{baker_conditioning_2024}
Baker, E.~L., Yang, G., Severinsen, M.~L., Hipsley, C.~A., and Sommer, S. (2024).
\newblock Conditioning non-linear and infinite-dimensional diffusion processes.
\newblock arXiv:2402.01434.

\bibitem[Baudoin, 2002]{baudoin2002conditioned}
Baudoin, F. (2002).
\newblock Conditioned stochastic differential equations: theory, examples and application to finance.
\newblock {\em Stochastic Processes and their Applications}, 100(1-2):109--145.

\bibitem[Bayer and Schoenmakers, 2013]{bayer_simulation_2013}
Bayer, C. and Schoenmakers, J. (2013).
\newblock Simulation of forward-reverse stochastic representation for conditional diffusions.
\newblock {\em The Annals of Applied Probability}, 24.

\bibitem[Bladt and S{\o}rensen, 2014]{bladt2014simple}
Bladt, M. and S{\o}rensen, M. (2014).
\newblock Simple simulation of diffusion bridges with application to likelihood inference for diffusions.
\newblock {\em Bernoulli}, pages 645--675.

\bibitem[Bradbury et~al., 2018]{jax2018github}
Bradbury, J., Frostig, R., Hawkins, P., Johnson, M.~J., Leary, C., Maclaurin, D., Necula, G., Paszke, A., Vander{P}las, J., Wanderman-{M}ilne, S., and Zhang, Q. (2018).
\newblock {JAX}: composable transformations of {P}ython+{N}um{P}y programs.

\bibitem[Corstanje et~al., 2023]{corstanje2023conditioning}
Corstanje, M., van~der Meulen, F., and Schauer, M. (2023).
\newblock Conditioning continuous-time markov processes by guiding.
\newblock {\em Stochastics}, 95(6):963--996.

\bibitem[De~Bortoli et~al., 2021]{de2021diffusion}
De~Bortoli, V., Thornton, J., Heng, J., and Doucet, A. (2021).
\newblock Diffusion schr{\"o}dinger bridge with applications to score-based generative modeling.
\newblock {\em Advances in Neural Information Processing Systems}, 34:17695--17709.

\bibitem[Delyon and Hu, 2006]{delyon_simulation_2006}
Delyon, B. and Hu, Y. (2006).
\newblock Simulation of conditioned diffusion and application to parameter estimation.
\newblock {\em Stochastic Processes and their Applications}, 116(11):1660--1675.

\bibitem[Durham and Gallant, 2002]{durham2002}
Durham, G.~B. and Gallant, A.~R. (2002).
\newblock Numerical techniques for maximum likelihood estimation of continuous-time diffusion processes.
\newblock {\em Journal of Business \& Economic Statistics}, 20(3):297--316.

\bibitem[Haussmann and Pardoux, 1986]{haussmann1986time}
Haussmann, U.~G. and Pardoux, E. (1986).
\newblock Time reversal of diffusions.
\newblock {\em The Annals of Probability}, pages 1188--1205.

\bibitem[Heng et~al., 2022]{heng_simulating_2022}
Heng, J., De~Bortoli, V., Doucet, A., and Thornton, J. (2022).
\newblock Simulating diffusion bridges with score matching.
\newblock arXiv:2111.07243.

\bibitem[Hyv{{\"a}}rinen, 2005]{JMLR:v6:hyvarinen05a}
Hyv{{\"a}}rinen, A. (2005).
\newblock Estimation of non-normalized statistical models by score matching.
\newblock {\em Journal of Machine Learning Research}, 6(24):695--709.

\bibitem[Milstein et~al., 2004]{milstein_transition_2004}
Milstein, G.~N., Schoenmakers, J.~G., and Spokoiny, V. (2004).
\newblock Transition density estimation for stochastic differential equations via forward-reverse representations.
\newblock {\em Bernoulli}, 10(2).

\bibitem[Papaspiliopoulos and Roberts, 2012]{papaspiliopoulos2012importance}
Papaspiliopoulos, O. and Roberts, G. (2012).
\newblock Importance sampling techniques for estimation of diffusion models.
\newblock {\em Statistical methods for stochastic differential equations}, 124:311--340.

\bibitem[Paszke et~al., 2019]{paszke2019pytorch}
Paszke, A., Gross, S., Massa, F., Lerer, A., Bradbury, J., Chanan, G., Killeen, T., Lin, Z., Gimelshein, N., Antiga, L., et~al. (2019).
\newblock Pytorch: An imperative style, high-performance deep learning library.
\newblock {\em Advances in neural information processing systems}, 32.

\bibitem[Pedersen, 1995]{pedersen1995consistency}
Pedersen, A.~R. (1995).
\newblock Consistency and asymptotic normality of an approximate maximum likelihood estimator for discretely observed diffusion processes.
\newblock {\em Bernoulli}, pages 257--279.

\bibitem[Schauer et~al., 2017]{schauer_guided_2017}
Schauer, M., Van Der~Meulen, F., and Van~Zanten, H. (2017).
\newblock Guided proposals for simulating multi-dimensional diffusion bridges.
\newblock {\em Bernoulli}, 23(4A).

\bibitem[Schilling and Partzsch, 2014]{schilling_brownian_2014}
Schilling, R.~L. and Partzsch, L. (2014).
\newblock {\em Brownian motion: an introduction to stochastic processes}.
\newblock De {Gruyter} textbook. de Gruyter, Berlin ; Boston, second edition.

\bibitem[Singhal et~al., 2024]{singhal2024whats}
Singhal, R., Goldstein, M., and Ranganath, R. (2024).
\newblock What{\textquoteright}s the score? {A}utomated denoising score matching for nonlinear diffusions.
\newblock In {\em Forty-first International Conference on Machine Learning}.

\bibitem[Särkkä and Solin, 2019]{sarkka_applied_2019}
Särkkä, S. and Solin, A. (2019).
\newblock {\em Applied Stochastic Differential Equations}.
\newblock Cambridge University Press, 1 edition.

\bibitem[Van~der Meulen and Schauer, 2020]{v2_2020automatic}
Van~der Meulen, F. and Schauer, M. (2020).
\newblock Automatic backward filtering forward guiding for {M}arkov processes and graphical models.
\newblock 2010.03509v2.

\bibitem[Vaswani et~al., 2017]{vaswani2017attention}
Vaswani, A., Shazeer, N., Parmar, N., Uszkoreit, J., Jones, L., Gomez, A.~N., Kaiser, {\L}., and Polosukhin, I. (2017).
\newblock Attention is all you need.
\newblock {\em Advances in neural information processing systems}, 30.

\bibitem[Vincent, 2011]{vincent2011connection}
Vincent, P. (2011).
\newblock A connection between score matching and denoising autoencoders.
\newblock {\em Neural computation}, 23(7):1661--1674.

\bibitem[Wang et~al., 2011]{wang2011quantifying}
Wang, J., Zhang, K., Xu, L., and Wang, E. (2011).
\newblock Quantifying the {W}addington landscape and biological paths for development and differentiation.
\newblock {\em Proceedings of the National Academy of Sciences}, 108(20):8257--8262.

\bibitem[Yang et~al., 2024]{yang2024simulating}
Yang, G., Baker, E.~L., Severinsen, M.~L., Hipsley, C.~A., and Sommer, S. (2024).
\newblock Simulating infinite-dimensional nonlinear diffusion bridges.
\newblock {\em arXiv preprint arXiv:2405.18353}.

\bibitem[Zhou et~al., 2024]{zhou2024denoising}
Zhou, L., Lou, A., Khanna, S., and Ermon, S. (2024).
\newblock Denoising diffusion bridge models.
\newblock In {\em The Twelfth International Conference on Learning Representations}.

\end{thebibliography}

\section*{Checklist}



 \begin{enumerate}

 \item For all models and algorithms presented, check if you include:
 \begin{enumerate}
   \item A clear description of the mathematical setting, assumptions, algorithm, and/or model. Yes: all theorems contain a statement of the assumptions and settings.
   \item An analysis of the properties and complexity (time, space, sample size) of any algorithm. Yes: We include the time complexity for the algorithm.
   \item (Optional) Anonymized source code, with specification of all dependencies, including external libraries. Yes: the anonymized source code is included with a read me file containing details on how to install everything.
 \end{enumerate}

 \item For any theoretical claim, check if you include:
 \begin{enumerate}
   \item Statements of the full set of assumptions of all theoretical results. Yes: We include these in the statements of the theorems
   \item Complete proofs of all theoretical results. Yes: The full proof are included in the appendix
   \item Clear explanations of any assumptions. Yes: the assumptions or where to find them are included in the theorem statements.     
 \end{enumerate}

 \item For all figures and tables that present empirical results, check if you include:
 \begin{enumerate}
   \item The code, data, and instructions needed to reproduce the main experimental results (either in the supplemental material or as a URL). Yes: To reproduce the experiments, we have an experiment folder in our codebase with all the code needed to reproduce. 
   \item All the training details (e.g., data splits, hyperparameters, how they were chosen). Yes: We include the hyperparameters in the paper itself, including in the appendix, but the exact setup for the experiments are also in the code base including all hyperparameters.
         \item A clear definition of the specific measure or statistics and error bars (e.g., with respect to the random seed after running experiments multiple times). Yes: where used, these are explained in our experiments section.
         \item A description of the computing infrastructure used. (e.g., type of GPUs, internal cluster, or cloud provider). Yes: This can be found in the appendix with the further experiment details.
 \end{enumerate}

 \item If you are using existing assets (e.g., code, data, models) or curating/releasing new assets, check if you include:
 \begin{enumerate}
   \item Citations of the creator If your work uses existing assets. Yes: Code that we use are cited. The code that we ran in the experiments is linked to. 
   \item The license information of the assets, if applicable. Yes: Links to the code are included which contain any license information.
   \item New assets either in the supplemental material or as a URL, if applicable. Not Applicable
   \item Information about consent from data providers/curators. Not Applicable
   \item Discussion of sensible content if applicable, e.g., personally identifiable information or offensive content. Not Applicable
 \end{enumerate}

 \item If you used crowdsourcing or conducted research with human subjects, check if you include:
 \begin{enumerate}
   \item The full text of instructions given to participants and screenshots. Not Applicable
   \item Descriptions of potential participant risks, with links to Institutional Review Board (IRB) approvals if applicable. Not Applicable
   \item The estimated hourly wage paid to participants and the total amount spent on participant compensation. Not Applicable
 \end{enumerate}

\end{enumerate}

\newpage

\appendix
\onecolumn
\aistatstitle{Score matching for bridges without time-reversals}

\section{PROOFS}\label{Appendix: Proofs}

\subsection{Proof of \cref{thm: loss function}}

\begin{theorem*}
Let $X$ be a diffusion process as defined in \cref{eq: unconditioned process} and suppose further that $f, \sigma$ are $C^2$ and that $X$ admits $C^2$ transition densities. Let $\Bar{X}$ be the conditioned diffusion satisfying \cref{eq: conditioned process}. Let $\Bar{\pbb}$
be the path probability of 
$\Bar{X}$ and 
$\mathbb{Q}^s$ the path probability of 
$X^s$ in \cref{eq: X^s} for a function $s$. 
Then for a given starting point $x_0$
\begin{align}
    \arg\inf_{s} \text{KL}(\Bar{\pbb} \parallel \mathbb{Q}^s) = \arg\inf_{s} \mathcal{L}(s, x_0),
\end{align}
where the infimum is over functions 
$s\colon [0, T]\times \rbb^d \to \rbb^d$.
\end{theorem*}

\begin{proof}\label{proof: kl loss function}
    The Kullback-Leibler divergence is defined by
\begin{align}
    \text{KL}(\Bar{\pbb}\parallel \mathbb{Q}^s)
    =\int \log \frac{\mathrm{d}\Bar{\pbb}}{\mathrm{d}\mathbb{Q}^s}\mathrm{d}\Bar{\pbb}(X).
\end{align}
By Girsanov's theorem, \cite[Theorem 18.8]{schilling_brownian_2014} 
\begin{align}
    \frac{\mathrm{d}\Bar{\pbb}}{\mathrm{d}\mathbb{Q}^s}(X) 
    = \exp\left(
    -\frac{1}{2} \int_0^T \|\sigma^\top(s- s_\theta)(t, X_t)\|^2\mathrm{d}t
    + \int_0^T \sigma^\top(s-s_\theta)(t, X_t) \mathrm{d}B_t
    \right).
\end{align}
Since the expectation of a martingale is equal to $0$, we see that
\begin{subequations}    
\begin{align}
\text{KL}(\Bar{\pbb}\parallel \mathbb{Q}^s)&= 
\int_{X\in \mathcal{X}}
\left[\frac{1}{2}\int_0^T 
\|(s-s_\theta)(t, X_t)\|_{\sigma\sigma^\top}^2 \mathrm{d}t\right]
\mathrm{d}\mathbb{P}^y(X)\\
&=
\frac{1}{2}\int_0^T \Bar{\ebb}^y\left[
\|(s-s_\theta)(t, X_t)\|_{\sigma\sigma^\top}^2 \right]\mathrm{d}t\\
&= \frac{1}{2}\int_0^T \ebb \left[
\|(s-s_\theta)(t, X_t)\|_{\sigma\sigma^\top}^2  \mid X_{0, x_0}(T) = y\right]\mathrm{d}t.
\end{align}
\end{subequations}

By \cite[Theorem 3.7]{bayer_simulation_2013}, this is equal to
\begin{align}
 \frac{1}{2p(0, x_0; T, y)}\int_0^T \lim_{\epsilon \to 0}
\ebb [
\|(s-s_\theta)(t, Y(T-t))\|_{\sigma\sigma^\top}^2 \mathcal{Y}(T) K_\epsilon(Y(T), x_0)]\mathrm{d}t.
\end{align}
We next focus on the contents of the expectation. Writing $s$ out in full as $s=\nabla_x \log p(t, x; T, y)$, using $K_{\epsilon}$ in place of $K_{\epsilon}(Y(T), x_0)$ and expanding the square, we see
\begin{subequations}
\begin{align}
\int_0^T \ebb [
\|(s-&s_\theta)(t, Y(T-t))\|_{\sigma\sigma^\top}^2 \mathcal{Y}(T) K_\epsilon(Y(T), x_0)] \mathrm{d}t\label{eq: pre expanded square}\\
&=\ebb\left[\int_0^T\|s_{\theta}(t, Y(T-t))\|_{\sigma\sigma^\top}^2\mathcal{Y}(T)K_\epsilon \mathrm{d}t \right]\\
&+ \ebb\left[\int_0^T\|\nabla \log p(t, Y(T-t); T, y)\|_{\sigma\sigma^\top}^2 \mathcal{Y}(T) K_\epsilon \mathrm{d}t \right]\\
&-2\mathbb{E}\left[\sum_{l=1}^L \int_{t_{\ell-1}}^{t_{\ell}}
\langle s_{\theta}(t, Y(T-t)), \nabla \log p(t, Y(T-t); T, y)\rangle_{\sigma\sigma^\top} \cdot \mathcal{Y}(T)K_\epsilon\mathrm{d}t\right].\label{eq: mult term}
\end{align}
\end{subequations}
\pagebreak

We focus on the third term, \cref{eq: mult term}. 
Multiplying this out further and using \citet[Theorem 3.6]{bayer_simulation_2013}, we see that
\begin{subequations}
\begin{align}
    &\mathbb{E}\left[
    s_{\theta}(t, Y(T-t))\cdot \nabla \log p(t, Y(T-t); T, y) \cdot \mathcal{Y}(T)\right]\\
    = &\int s_{\theta}(t, x)\nabla_x \log p(t, x, T, y) p(t_0, x_0; t, x) p(t, x; T, y) \mathrm{d}x_0 \mathrm{d}x\\
    = & \int s_\theta(t, x)\nabla_x p(t, x; T, y) p(t_0, x_0; t, x) \mathrm{d}x_0 \mathrm{d}x.\label{eq: before chapman}
\end{align}
\end{subequations}

Using Chapman-Kolmogorov it holds that for any $t<t_1<T$,
\begin{align}
    p(t, x; T, y) = \int p(t, x; t_1, x_1)p(t_1, x_1; T, y)\mathrm{d}x_1,
\end{align}
and so, using the properties of the differential of the logarithm it holds
\begin{align}
    \nabla_x p(t, x; T, y) = \int \nabla_x \log p(t, x; t_1, x_1) p(t, x; t_1, x_1)p(t_1, x_1; T, y)\mathrm{d}x_1.
\end{align}
Putting this back into \cref{eq: before chapman} and using \citet[Theorem 3.6]{bayer_simulation_2013} we get,
\begin{subequations}
\begin{align}
    &\int [s_{\theta}(t, x) \nabla_x \log p(t, x; t_1, x_1)] p(t_0, x_0; t, x) p(t, x; t_1, x_1)p(t_1, x_1; T, y) \mathrm{d}x_0 \mathrm{d}x \mathrm{d}x_1\\ 
=&\ebb[s_{\theta}(t, Y(T -t))\nabla \log p(t, Y(T-t); t_1, Y(T-t_1))\mathcal{Y}(T)].
\end{align}
\end{subequations}

From this we see that \cref{eq: mult term} becomes
\begin{align}
    -2 \mathbb{E}\left[
    \sum_{\ell=1}^L \int_{t_{\ell-1}}^{t_\ell}
    \langle s_\theta(t, Y(\hat{t})),
    \nabla \log p(t, Y(\hat{t}); t_\ell, Y(\hat{t_\ell}))\rangle_{\sigma\sigma^\top} \mathcal{Y}(T)K_\epsilon \mathrm{d}t
    \right].
\end{align}

Then \cref{eq: pre expanded square} is equal to
\begin{align}
    \sum_{\ell=1}^L \int_{t_{\ell-1}}^{t_\ell} \mathbb{E}\left[
    \|s_\theta(t, Y(\hat{t})) - \nabla \log p(t, Y(\hat{t}); t_{\ell}, Y(\hat{t}_{\ell}))\|_{\sigma\sigma^\top}^2 \mathcal{Y}(T)K_\epsilon \right] \mathrm{d}t + \text{const}.,
\end{align}
where
\begin{align}
    \text{const.} = \sum_{\ell=1}^L \int_{t_{\ell-1}}^{t_\ell} \left[
    \|\nabla \log p(t, Y(\hat{t}); T, y)\|_{\sigma\sigma^\top}^2
    - \|\nabla \log p(t, Y(\hat{t}); t_{\ell}, Y(\hat{t_{\ell}}))\|_{\sigma\sigma^\top}^2 \right] \mathcal{Y}(T)K_\epsilon \mathrm{d}t.
\end{align}

\end{proof}

\subsection{Proof of \cref{thm: independent loss fn}}
\begin{theorem*}\label{proof: x_0 integral}
We assume the same setup as in \cref{thm: loss function}. Define $\Bar{\mathcal{L}}(s) \coloneqq \int\mathcal{L}(s, x_0) \mathrm{d}x_0$. Then, letting $I_\ell$ be defined as in \cref{eq: Il}
\begin{subequations}
\begin{align}
        \Bar{\mathcal{L}}(s) &\coloneqq \mathbb{E} \left[\frac{\mathcal{Y}(T)}{p(0, Y(T); T, y)}
        \sum_{l=1}^L I_\ell
    \right].
\end{align}
\end{subequations}

\end{theorem*}

\begin{proof}
By definition 
\begin{align}
    \int \mathcal{L}(s, x)\mathrm{d}x 
    = \int
      \frac{1}{p(0, x; T, y)}\lim_{\epsilon \to 0}
    \sum_{l=1}^L \int_{t_{l-1}}^{t_l}
    \ebb[
    \|s(t, Y(\hat{t}))
    - \nabla \log p(t, Y(\hat{t}); t_l, Y(\hat{t_l}))\|_{\sigma\sigma^\top}^2
    \mathcal{Y}(T) K_\epsilon(Y(T), x)
    ] \mathrm{d}t
    \mathrm{d}x.
\end{align}

Swapping the order of integration, this is equal to
\begin{align}
    \lim_{\epsilon \to 0}
    \sum_{l=1}^L \int_{t_{l-1}}^{t_l}
    \ebb\left[
    \int \frac{\mathcal{Y}(T) K_\epsilon(Y(T), x)}{p(0, x; T, y)}
    \|s(t, Y(\hat{t}))
    - \nabla \log p(t, Y(\hat{t}); t_l, Y(\hat{t_l}))\|_{\sigma\sigma^\top}^2
    \mathrm{d}x
    \right] \mathrm{d}t.
\end{align}
Now noting that the only term to depend on $x$ is $p(0, x; T, y)$, and that 
\begin{align}
    \lim_{\epsilon\to 0}\int \frac{1}{p(0, x; T, y)}K_\epsilon(Y(T), x) \mathrm{d}x = \frac{1}{p(0, Y(T); T, y)},
\end{align}
gives the result.
\end{proof}

\section{ALGORITHM}
This is the exact algorithm we use to condition a given SDE on a given fixed point $y$, as described in \cref{sec: computing loss}.
\begin{algorithm}[ht]
\caption{SDE Bridge: Condition an SDE on a fixed endpoint $y$}
\label{algo: fixed y}
\SetCustomAlgoRuledWidth{0.45\textwidth}
\SetKwInput{Input}{input}
\Input{Number of batches $N$, endpoint $y$, time grid $\{t_\ell\}_{\ell=1}^L$}
$\hat{t}_\ell \gets T - t_{L-\ell}$
\\
\While{not converged}{
For given $y$, compute $N$ trajectories 
$\{Y^y(\hat{t}_\ell)\}_{\ell=1}^L$
\\
$Z^y_{\ell} \gets Y^y(\hat{t}_{L-\ell})$
\\
$g_\ell \gets g(t_\ell, Z^y_{\ell}, t_{\ell+1}, Z^y_{{\ell+1}})$
\\
$\mathcal{L}_\theta \gets 
\frac{1}{(\Delta t) N}
\sum_{\ell=0}^{L-1} \sum_{Z^y_{\ell}} 
\|[s_\theta(t_\ell, Z^y_{\ell})
- g_\ell]\|_{\sigma\sigma^\top}^2$
\\
Compute $\frac{\mathrm{d} \mathcal{L}_{\theta}}{d\theta}$ and use it to update $s_\theta$
}
\end{algorithm}

For a batch of $N$ trajectories, and $L$ time points per trajectory, with the SDE having dimension $d$ the computational complexity is $O(N\cdot L \cdot d^3).$ The $d^3$ complexity comes from the covariance matrix arithmetic.

\section{EXPERIMENT DETAILS}\label{Appendix: experiments}

Here we provide some more details for the individual experiment setups.
For all of the experiments one NVIDIA RTX 4090 GPU and one Intel(R) Xeon(R) CPU E5-2650 v4 @ 2.20GHz were used.

\subsection{Ornstein-Uhlenbeck}\label{Appendix: experiments ou}

We use the same Ornstein-Uhlenbeck formulation for both experiments as detailed in \cref{eq: OU unconditioned}. The adjoint processes are explicitly computed as follows:
\begin{subequations}   
\begin{align}
    &\mathrm{d}Y(t) = Y(t) \mathrm{d}t + \mathrm{d}W(t) \quad &Y(0)=y\\
    &\mathrm{d}\mathcal{Y}(t) = \mathcal{Y}(t) \mathrm{d}t \quad & \mathcal{Y}(0)=1.
\end{align}
\end{subequations}
Moreover, the score function for the unconditioned process $X_t$ is given by
\begin{align}
    \nabla_x \log p(t, x; s, y) = \frac{\exp\{-(s-t)\}}{v(s-t)}\left(
        y - m(s-t, x)
    \right),
\end{align}
where the functions $m, v$ are the mean and variance defined as
\begin{align}
    m(t, x) = x \exp(-t) \qquad v(t) = \frac{1-\exp(-2 t)}{2}.
\end{align}

\paragraph{Variable \texorpdfstring{$y$}{y}}\label{appendix: ou variable y}
The values $y$ are sampled uniformly from the interval $[-1, 1]$. We train on $1000$ iterations each with $1000$ sample paths. These parameters are chosen following the similar experiment from \citet{heng_simulating_2022}. For the neural network, we use the same as \citet{heng_simulating_2022}: we use the sinusoidal embedding to encode the time steps \citep{vaswani2017attention}, and use three multi-layer perceptron blocks with the Leaky ReLU activation function. In \cref{fig: ou data distribution} we plot the generated data at three different time steps, and in \cref{fig: ou error} we plot the absolute error between the true score and learned score trained on said generated data. 
For smaller times, the data has not yet had so much time to disperse leading to higher errors in areas where there is a larger difference between $x$ and $y$. 

\paragraph{Distributed \texorpdfstring{$y$}{y}}
\begin{figure}
    \centering
    \includegraphics{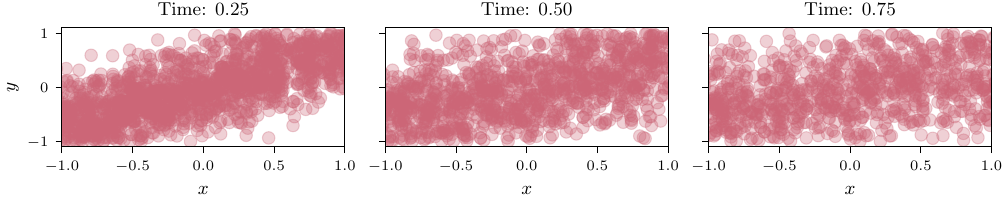}
    \caption{For the time points $t \in [0.25, 0.50, 0.75]$ we plot all the points $Y(t)$ of the generated adjoints that were used in training to learn the score $\nabla_x \log p(t, x; 1, y)$.  }
    \label{fig: ou data distribution}
\end{figure}

\begin{figure}
    \centering
\includegraphics{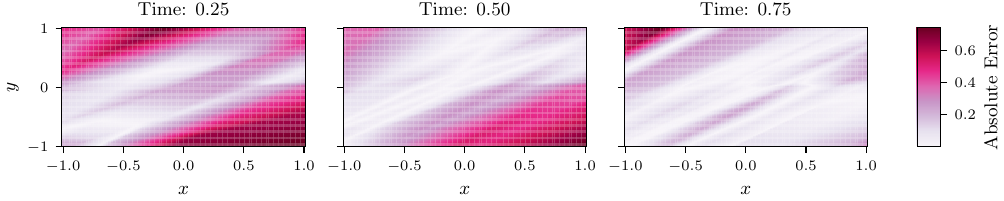}
    \caption{We plot the absolute error between the true score and the learned score from \cref{fig:varied y error}.}
    \label{fig: ou error}
\end{figure}

For all methods we train on 100 iterations of 1000 samples, and use the same MLP network architecture used by \citet{heng_simulating_2022}, as described in \cref{appendix: ou variable y}. 
We train with five different random seeds and plot the average mean square error and the standard deviation between the different trainings in \cref{fig:dim_comparison}.

\subsection{Cell model}
\begin{figure}[ht]
    \centering
\includegraphics{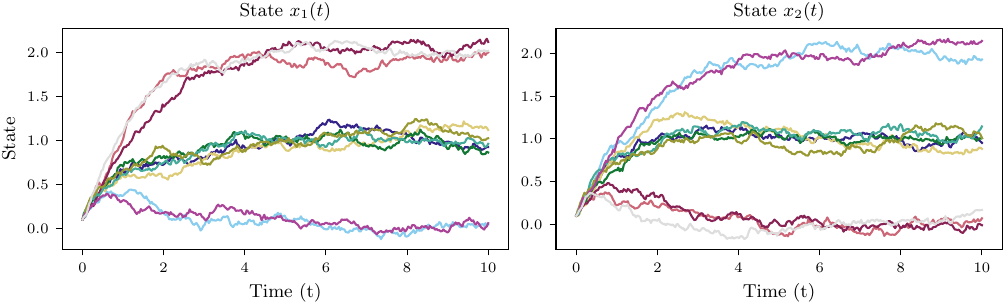}
    \caption{Trajectories from the unconditioned process of cell differentiation into three different states.}
    \label{fig: cell forward}
\end{figure}

In \cref{fig: cell forward} we plot unconditional trajectories from the cell model defined in \cref{eq: cell sde}.
For the proposed method, the time-reversed bridge (DB (bw)) and the forward bridge (DB (fw)), we train on $10,000$ trajectories in total with batch sizes of $100$. We use the same network structure for both: again the architecture described in \cref{appendix: ou variable y}.
However note the proposed method is written in Jax \citep{jax2018github}, whereas the others are implemented in Pytorch \citep{paszke2019pytorch}, since we used the code from the authors\footnote{\texttt{https://github.com/jeremyhengjm/DiffusionBridge}.}.
For the guided method we use \cite{schauer_guided_2017}'s code written in Julia \footnote{Codebase at \texttt{https://github.com/mschauer/Bridge.jl} from file \texttt{project/partialbridge\_fitzhugh.jl}}. We first simulate \textcolor{black}{$1000$} trajectories, without saving, and from then save every \textcolor{black}{$100^{\text{th}}$} accepted trajectory, with $\rho=0.7$. 

\subsection{Shape models}

The SDE in \cref{eq: kunita sde} represents the position of the points $x_i$ at time $t$. We can equivalently look at the SDE representing the change of position, $Y(t) \coloneqq X(t) - x_0$, with the constraint $Y(0) = 0$:
\begin{align}\label{eq: kunita difference}
    Y(t) = \int \sum_{i=1}^m k(Y(t) + x_0, z_i)\mathrm{d}W(t).
\end{align}
In the shape experiment, we condition the process $Y_t$ as \citet{baker_conditioning_2024,yang2024simulating} do. This gives better results in training than conditioning the position $X_t$ (\cref{eq: kunita sde}) directly.

We train on a total of $409,600$ trajectories with a batch size of $128$. 
We set $\sigma=0.04$ and $\alpha=1/\sigma$ for the Gaussian function $k$ defining the SDE. 
For the grid points $\{z_i\}_{i=1}^m$ we take a $50\times 50$ grid of evenly spaced values on $[-1.5, 1.5]\times [-1.5, 1.5]$. 
We train on $30$ total points and sample $80$ points after training. 
We use the neural network architecture proposed by \cite{yang2024simulating}: a U-Net based Fourier neural operator architecture.

\end{document}